\documentclass[lettersize,journal]{IEEEtran}
\usepackage{amsmath,amsfonts}
\usepackage{algpseudocode}
\usepackage{algorithm}
\usepackage{hyperref}
\usepackage{array}
\usepackage[caption=false,font=normalsize,labelfont=sf,textfont=sf]{subfig}
\usepackage{textcomp}
\usepackage{cite}
\usepackage{amsthm}
\newtheorem{theorem}{Theorem}
\newtheorem{lemma}[theorem]{Lemma}
\usepackage{stfloats}
\usepackage{url}
\usepackage{verbatim}
\usepackage{graphicx}
\usepackage{cite}
\hyphenation{op-tical net-works semi-conduc-tor IEEE-Xplore}
\usepackage{amssymb}
\begin{document}

\title{Online Obstacle evasion with Space-Filling Curves}

\author{Ashay Wakode$^{1}$, Arpita Sinha$^{2}$
\thanks{$^{1}$ Ashay Wakode is with Mechanical Engineering Department, Indian Institute Of Technology Bombay, Mumbai, India
{\tt\small  {ashaywakode}@iitb.ac.in  }}

\thanks{$^{2}$ Arpita Sinha is with System and Controls Department; Indian Institute Of Technology Bombay, Mumbai, India  \\
{\tt\small  {arpita.sinha}@iitb.ac.in  }
}

}


\maketitle
\begin{abstract}
The paper presents a strategy for robotic exploration problems using Space-Filling curves (SFC). The region of interest is first tessellated, and the tiles/cells are connected using some SFC. A robot follows the SFC to explore the entire area. However, there could be obstacles that block the systematic movement of the robot. We overcome this problem by providing an evading technique that avoids the blocked tiles while ensuring all the free ones are visited at least once. The proposed strategy is online, implying that prior knowledge of the obstacles is not mandatory. It works for all SFCs, but for the sake of demonstration, we use Hilbert curve. We present the completeness of the algorithm and discuss its desirable properties with examples. We also address the non-uniform coverage problem using our strategy. 
\end{abstract}
\begin{IEEEkeywords}
Robotic Exploration, Space-Filling curve, Online Obstacle evasion, Non-uniform coverage.
\end{IEEEkeywords}
\section{Introduction}
\IEEEPARstart{I}{n} 1878, George Cantor demonstrated that an interval $I = [0,1]$ can be mapped bijectively onto $[0,1] \times[0,1]$. Later, G. Peano discovered one such mapping that is also continuous and surjective; the image of such mapping when parameterized in the interval $I$ to higher dimensions ($\mathbb{R}^{n}$) is known as Space-Filling Curve (SFC). More SFCs were later discovered by E. Moore, H. Lebesgue, W. Sierpinski, and G. Polya \cite{sagan,bader}. See Fig \ref{fig:1}.
SFCs have some interesting properties - 
\begin{itemize}
    \item \textbf{Self-Similar} - Each curve is made out of similar sub-curves
    \item \textbf{Surjective Map} - SFC passes through every point of $\mathbb{R}^{n}$
    \item \textbf{Locality Preserving} - Two points close by in $I$ map to
close by points in $\mathbb{R}^{n}$
\end{itemize}
\par
Due to the above properties, SFCs have been used in many applications - data collection from sensor network \cite{sensor1, sensor2}; ordering meshes of complex geometries \cite{bader} and many more. An approximate solution to Travelling Salesman Problem (TSP) can be found using Hilbert's Space Filling curve \cite{tsp}. Space Filling Tree analogous to SFCs having tree-like structure have been proposed for sampling-based path planning \cite{SFCtree}, as opposed to traditional methods like Rapid-exploring Random Trees (RRTs) \cite{RRT}.
\par
\begin{figure}[t]
    \centering
    \includegraphics[scale=0.3]{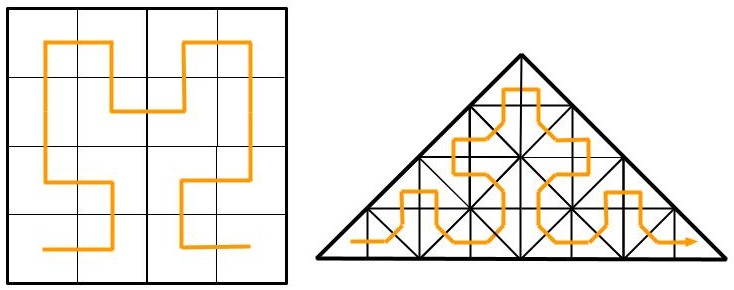}
    \caption{Hilbert curve and Sierpinski curve}
    \label{fig:1}
\end{figure}
One of the major applications of SFCs is robotic exploration. In robotic exploration problem, a single or group of robotic agents are deployed to search, survey or gather information about a specific region while avoiding obstacles. Robotics exploration is one of the sub-problems of the larger Coverage Planning Problem (CPP), wherein the agent is bestowed with the task of visiting all points in a given region or volume while avoiding obstacles \cite{galceran, choset, uavsurvey}. Numerous approaches for CPP already exists - Graph based, Grid based, Neural-Network based, Cellular decomposition based \cite{galceran}. Each of these approaches can be used for robotic exploration problem. 
\par
SFCs have been used for robotics exploration and the reasons making it eligible are: 
\begin{itemize}
\item \textbf{Complete and Robust} : Coverage using SFCs is time complete and robust to failure \cite{spires}
    \item \textbf{Extensible} : 
    \begin{itemize}
        \item Dimension: The strategy developed for 2D can extended to 3D since similar grammar exists for the construction of SFC in both dimensions \cite{bader}
        \item Number of search agents : Coverage using SFCs have been shown to be complete and robust when multiple agents are used \cite{spires}
        \item Irregular Area : The generalized version of SFCs aka Generalized SFC (GSFC) can span irregular quadrilateral and triangles as opposed to SFCs which map regular shapes like square or isosceles right triangle \cite{bader}
    \end{itemize}

\item \textbf{Non-Uniform coverage} : SFCs can be easily used in non-uniform coverage scenarios requiring some parts to be searched more rigorously than others. On top of that Hilbert curve has been shown to be more efficient in time/energy than the lawnmower's path \cite{sadat2}
 \end{itemize}
 \par
A 2D area with multiple obstacles of arbitrary shape and size is explored using a SFC. The area is tessellated 
(referred to as waypoints) and tiles/cells are connected in the order determined by SFC. The tessellation with 
obstacle is considered unreachable. A online strategy is suggested for a search agent to explore the area using SFC and evade the obstacle on the go. It suggests modified waypoint when unreachable waypoints are encountered. It guarantees that all the waypoints reachable from the initial waypoint are explored. The strategy works for all SFCs, but for the sake of demonstration Hilbert's curve is used in this paper.
\par
The rest of this paper is organized into six sections: Section II elaborates on the related work and the need for a new strategy. Section III discusses preliminaries on SFCs. Section IV details the strategy, investigates its completeness and discusses its useful properties. Section V compiles the simulation result. Section VI concludes the paper and talks about the limitations and possible future work. 

\section{Related Work}
There is sizable literature dealing with using SFC in the robotic exploration problem.  \cite{spires} suggests the use of a swarm of mobile robots for exploration, each mobile robot following a SFC. This approach is efficient in terms of energy, robust to failures and assures coverage in finite time, but considers an obstacle-free environment. \cite{sadat1} proposes a new method for non-uniform coverage planning for UAVs. \cite{sadat2} takes the work further and uses the Hilbert curve for non-uniform coverage which is optimal as opposed to existing methods. \cite{hawk} introduced a UAV system to conduct aerial localization that uses Moore's SFC. However, \cite{sadat1, sadat2, hawk} does not consider obstacles. 
\par
\cite{tiwari} introduced path planning approach for SFCs with obstacles and proved the optimality for specific obstacle configurations, due to the existence of a Hamiltonian path for such obstacle configurations. \cite{ban} introduced an algorithm to construct SFCs for sensor networks with holes. The algorithm can be used for motion planning with obstacles while using SFC. However, the solutions proposed in \cite{tiwari, ban} require the knowledge of obstacles before starting the exploration.
\par 
\cite{nair} formulated an online obstacle evasion strategy for Hilbert curve with only one waypoint blocked by an obstacle. \cite{joshi} carries forward the work and suggests a strategy capable of evading two neighboring waypoints on a Hilbert curve. \cite{wakode} talks about online obstacle evasion for aerial vehicle covering a region using the Sierpinski curve, but the obstacles need to block disjoint cells.
\par
Early work on the topic did not consider obstacles in the environment. Later works considered obstacles but required knowledge about the obstacles or were restricted to a particular class of obstacles and SFC used. Therefore an online strategy capable of evading arbitrary obstacles while using any SFC is the required next step.

\section{Preliminaries on SFCs}
SFCs are defined as, \\
\textbf{Definition} : Given a mapping $f : I \rightarrow Q \subset \mathbb{R}^{n}$, with $f_{*}\textit(I)$ as image. $f_{*}\textit(I)$ is called a SFC, if $f_{*}\textit(I)$ has Jordan content (area, volume, ..) larger than 0 \cite{bader}. 
\par 
In this paper, approximations of SFCs are used. Approximate SFCs are constructed by dividing $\mathbb{R}^{n}$ and connecting the centers of the cells by a continuous piecewise straight line. The way the divisions are connected depends on the grammar of the SFC being used.
\par
\begin{figure}[t]
    \centering
    \includegraphics[width=3in]{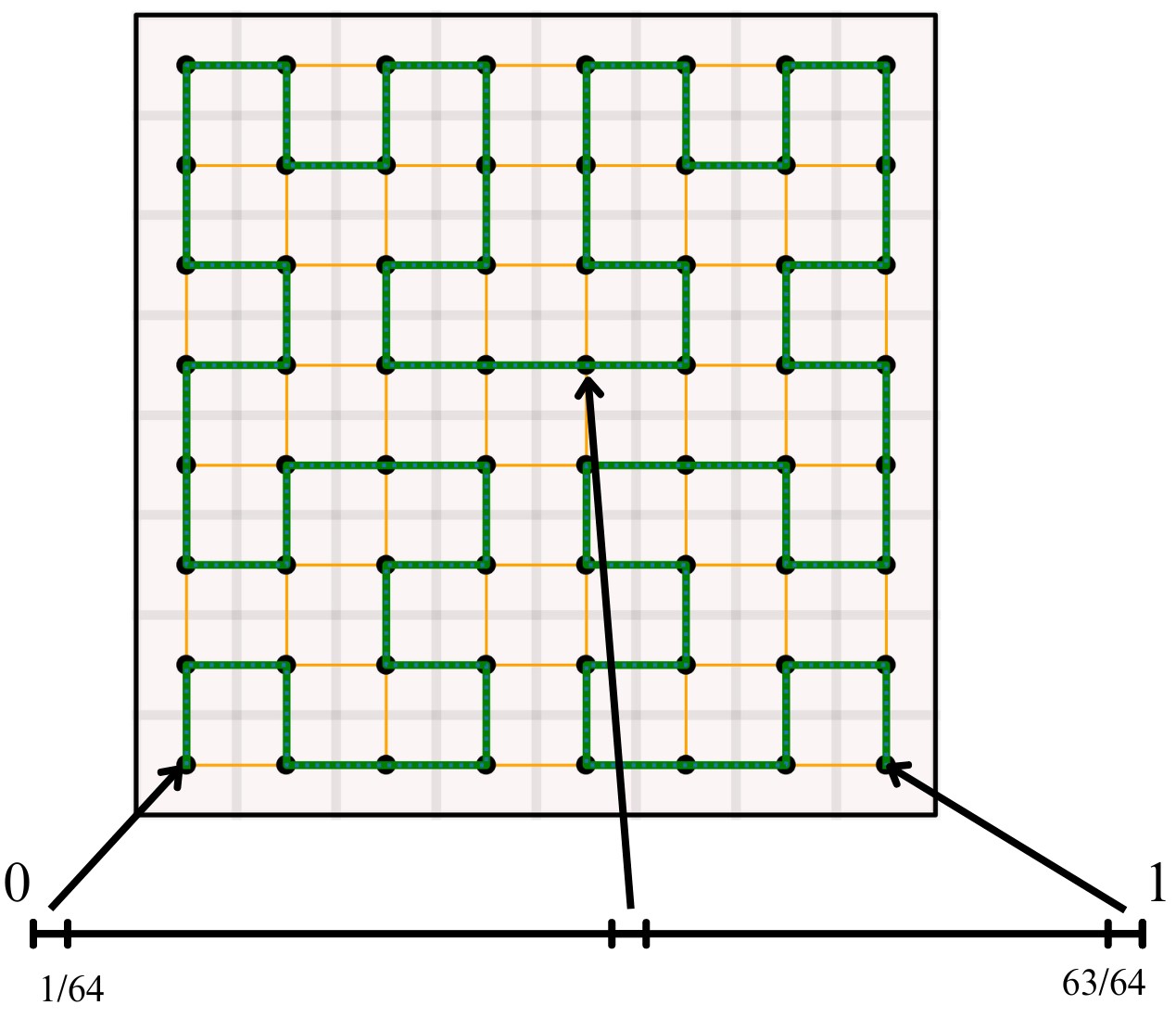}
    \caption{Hilbert curve as mapped from $I$}
    \label{fig:mapping}
\end{figure}
\begin{figure*}[t]
    \centering
    \includegraphics[width=4.8in]{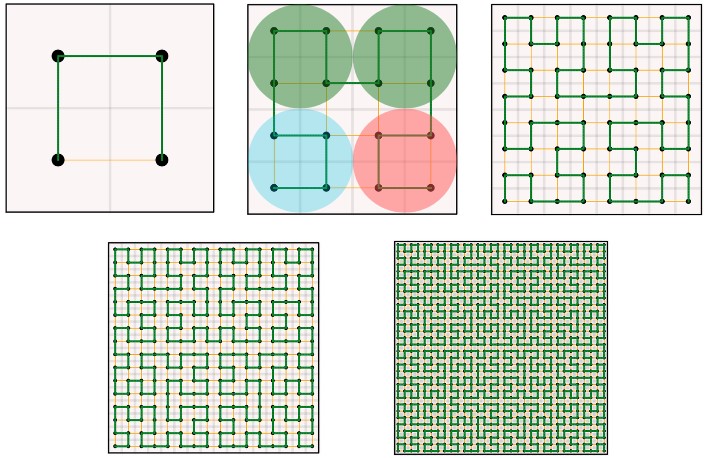}
    \caption{Hilbert curve iterations 1 to 5; Colored circles represent specific translation + rotation rules for creating $n^{th}$ iteration from $n-1^{th}$ iteration}
    \label{fig:iterations}
\end{figure*}
\par
Here in Fig \ref{fig:mapping}, it is seen that a square is divided N times and the centers are connected to generate Hilbert curve. In this case, hilbert curve is mapped onto a square in $\mathbb R^{2}$. The degree to which the square is divided is quantified by the term - "iteration". As iteration goes to infinity, approximate SFC visits every point in the square; this curve is termed as SFC. Thus explains the \textbf{surjective mapping} property of SFC. A Rigorous mathematical explanation can be found in \cite{sagan,bader}. The SFCs in Fig \ref{fig:1}, \ref{fig:mapping} are approximate but are referred to simply as SFC in literature. 
\par
Moreover, approximate SFCs are sufficient for the exploration given that the robotic agent has a search radius enough to cover the entire division (area or volume) while at its center.
\par
Furthermore, it can be seen in Fig \ref{fig:iterations} that iteration $N$ is constructed by rotation and translation of iteration $N-1$. Hence explaining the \textbf{self-similar} property. The grammatical way of constructing SFCs uses this fact.
\par SFCs are H\"{o}lder continuous mappings. Given, 
\begin{equation*}
    x, y \in I
\end{equation*}
and $f$ is SFC with $f : I \rightarrow Q \subset \mathbb{R}^{n}$, if,
\begin{equation*}
     \lVert f(x)- f(y) \rVert _{2} \leq C |x-y|^{r} 
\end{equation*}

$f$ is said to be H\"{o}lder continuous with exponent $r$, if a constant $C$ exists for all $x,y$. The RHS can be interpreted as the distance between $x$ and $y$, while LHS is the distance between points in SFC. Here, the distance between points on SFC is bounded by the distance between the points they are mapped from, explaining the \textbf{Locality Preservation}.
\par
Lastly, any problem in Q $\subset \mathbb{R}^{n}$ mapped by SFC can be transformed into a problem in $I$. And often solving the problem in $I$ is straightforward than solving it in the original space $Q$. This technique is referred to as \textbf{General Space-Filling Heuristics} (\textbf{GSFH}). 
\section{Problem Formulation and Proposed Solution}
A robotic agent with sensing radius $s$, is given a task to explore a 2D region using SFC and avoiding the obstacles when detected. The region to be explored can have any shape and size. But, SFCs only span regular shapes like squares (Hilbert curve) and isosceles right triangle (Sierpinski curve). Here, it is assumed that the shape to be explored is regular, since an irregular shape can be decomposed into regular shapes and explored separately. Extension of SFCs to General SFCs \cite{bader}, which map arbitrary quadrilaterals and triangles, also exists and can be used. For this paper, we look into SFCs only. 
\par
The strategy suggested can be used for any SFC. For demonstration the results are worked out for Hilbert curve. A square with area $A$, has obstacles at unknown locations. Without loss of generality, let the robotic agent start at left-hand bottom corner of the square. It follows Hilbert curve to explore the region while avoiding the static obstacles and get to the right-hand bottom corner. The obstacle can have arbitrary shape and size. The iteration ($k \in N$) of Hilbert curve used is such that the agent can scan an entire cell while at its center. Mathematically, 
\begin{equation}
 \label{eq1}
    k \geq \lceil log_{2}(A / s \sqrt{2} - 1) \rceil
\end{equation}
$Note$ : This relation has been found out by comparing sensing radius and diagonal of cell for Hilbert curve. Similar relation for other SFCs can be found out trivially.
\par
Here, the robotic agent detects an obstacle just before going to the cell with an obstacle. It is also assumed that the agent has the capability to memorize the locations of detected obstacles.
\subsection{Proposed Solution}
Given an SFC mapping an area, the center of each cell is identified as waypoint and are numbered from $0$ to $N-1$ based on SFC. The goal of the robotic agent is to start at $0$ and visit all the unblocked reachable waypoints, This means going to a waypoint with the maximum number possible. Certain region may not be blocked by the obstacle but can still be unreachable when it is blocked by obstacles on all sides, like in the case of annular shaped obstacle. 
\par 
Here, a graph theoritic solution is presented. Nomenclature used :
\begin{itemize}
    \item $G = $ Graph with waypoints of SFC as vertices; Adjacent vertices are connected by the edge; No waypoint is assumed to be blocked by an obstacle in $G$. $G$ is a dual graph of SFC decomposition, similar to one used in \cite{tiwari}.
    \item $O = $ Set of vertices where obstacles have been detected  
    \item $V = $ Set of visited vertices 
    \item $A(V, G) = $ Set of vertices in $G$ adjacent to vertices in $V$ but not present in $V$
\end{itemize}
The strategy is presented as a pseudocode in Algorithm \ref{alg:main}. The agent uses the strategy at each waypoint to decide the next waypoint. While at waypoint $c$, the agent knows $G$ given the SFC, $V$ from the visited nodes and $O$ from the detected obstacles till that time, and needs to decide about next waypoint to visit.
If all the waypoints adjacent to $V$ in $G$ are blocked, no waypoint remains to be visited and the search can be terminated (line $4$). Suppose there exists a set of waypoints that can be visited. In that case, the strategy suggests minimum numbered waypoint $p$, and the shortest path $R$ to $p$ is found using Dijkstra's algorithm (any shortest path finding algorithm can be used).  $R$ refers to an array of waypoints starting with $c$ and terminating at $p$. The agent checks if $p$ is blocked while at the pen-ultimate waypoint (line $7$). If $p$ is found to be blocked, it is added to $O$ and the strategy starts again at step $4$ (line $8$ to $10$). Otherwise, $p$ is reached and added to $V$ (line $12$ and $13$). The algorithm is used again to decide about the next waypoint. The strategy is explained using an example in the next section.

\begin{algorithm}
\caption{Online Strategy for Obstacle evasion}\label{alg:main}
\begin{algorithmic}[1]
\State Input : $G$, $V$, $O$
\State Output : Next waypoint
\State Initialization : Current waypoint ($c$)
\If{$A(V, G) - O \neq \emptyset$}
    \State $p = min(A(V, G) - O)$ \Comment{min numbered element}
    \State $R =$ shortest route to $p$ starting from $c$
    \While{Agent at $R[-2]$} \Comment{second last element of $R$}
        \If{$p$ is blocked $\lor$  $p \in O$} 
        \State $O \gets O \cup \{p\}$  
        \State Go back to step $4$
        \Else 
        \State Next waypoint $= p$ 
        \State  $V \gets V \cup \{p\}$
        \EndIf
        \EndWhile
\Else 
\State All the reachable waypoints visited
        \EndIf
\end{algorithmic}
\end{algorithm}

\begin{lemma}
An agent starting at waypoint $H$ and following the proposed strategy will visit all the waypoints connected to $H$.
\end{lemma}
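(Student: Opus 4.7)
The plan is to prove the lemma by contradiction: assume the algorithm halts while some unblocked waypoint $w$ reachable from $H$ is missing from $V$, and derive a violation of the exit condition $A(V, G) - O = \emptyset$ at line 4. Before running that argument I would record two preliminary facts. First, the procedure terminates in finitely many outer iterations, because every iteration either adds a new element to $V$ on line 13 or adds a new element to $O$ on line 9, and both sets are contained in the finite vertex set of $G$. Second, $O$ always consists only of genuinely blocked waypoints: the only insertion point is line 9, which is gated by the test on line 8 that reports $p$ blocked (or already flagged in $O$), so by induction on insertions $O$ stays within the true obstacle set.

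The main step then runs as follows. Assume termination with an unblocked waypoint $w$, reachable from $H$ via unblocked vertices, such that $w \notin V$. Choose any such path $H = v_0, v_1, \dots, v_k = w$. With $H \in V$ by initialization and $w \notin V$, let $i^*$ be the smallest index for which $v_{i^*} \notin V$; then $i^* \geq 1$, $v_{i^*-1} \in V$, and since $v_{i^*}$ is adjacent to $v_{i^*-1}$ in $G$, we get $v_{i^*} \in A(V, G)$. Because $v_{i^*}$ is unblocked, the second preliminary observation gives $v_{i^*} \notin O$. Hence $v_{i^*} \in A(V, G) - O$, contradicting the assumption that this set was empty when the algorithm exited. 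Therefore every reachable unblocked waypoint lies in $V$.

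The subtlety I expect to need the most care about is bookkeeping. The pseudocode only appends the \emph{target} $p$ to $V$ on line 13, while the agent physically transits several cells along the precomputed route $R$; one must be careful not to conflate "visited by the robot" with "element of $V$" in the algorithmic sense. Fortunately my contradiction argument only needs $H \in V$ and the first non-$V$ vertex along the witness path, so it goes through with the narrow pseudocode definition of $V$ provided one explicitly initializes $V \gets \{H\}$. A secondary bookkeeping point is that the Dijkstra call on line 6 must run on $G \setminus O$ and that $p$ must actually be reachable from $c$ there; this follows from the fact that the physically traversed waypoints form a connected subgraph of $G \setminus O$ containing $c$, and $p \in A(V, G) - O$ is adjacent in $G$ to some already-visited vertex, so a route always exists. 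Both of these are mostly definitional clarifications rather than substantive obstacles.
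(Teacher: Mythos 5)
Your proposal is correct and follows the same basic strategy as the paper's proof: assume termination with an unvisited reachable waypoint and contradict the exit condition $A(V,G) - O = \emptyset$. However, your version is genuinely more complete. The paper's proof simply asserts that the missing waypoint $J$ satisfies $A(V,G) - O = \{J\}$, which implicitly assumes $J$ is adjacent to some already-visited vertex --- a fact that does not follow merely from $J$ being connected to $H$. Your ``first unvisited vertex along a witness path'' argument supplies exactly the missing step: by walking along an unblocked path from $H$ to $w$ and selecting the earliest vertex outside $V$, you produce a vertex that is provably in $A(V,G)$ and provably not in $O$, which is what the contradiction actually requires. Your two preliminary observations (finite termination via the strict growth of $V \cup O$, and the invariant that $O$ contains only genuinely blocked waypoints) are also absent from the paper but are needed for the argument to be airtight --- without termination the phrase ``the agent terminated without visiting $w$'' is vacuous, and without the invariant on $O$ one could not rule out $v_{i^*} \in O$. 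Your closing remarks on the gap between ``physically traversed'' and ``member of $V$,'' and on restricting the Dijkstra search to known-safe vertices, address real ambiguities in the pseudocode that the paper leaves implicit.
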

\begin{proof}
Let us prove the lemma by contradiction. Assume here a waypoint $J$ (not blocked by an obstacle) connected to $H$, but the agent terminated the exploration without visiting $J$. This can happen only if,
\begin{equation*}
    A(V, G) - O = \emptyset
\end{equation*}
But, 
\begin{equation*}
    A(V, G) - O = \{J\}
\end{equation*}
Since, $J$ is connected to $H$ and not visited. Therefore, a contradiction.
\end{proof}
Here, the agent will explore the area spanned by waypoints given the sensing radius enough to cover the entire cell (Eq. \ref{eq1}). But, this does not guarantee complete search. 
\par
\begin{figure*}[t]
\centering
\includegraphics[width=4.5in]{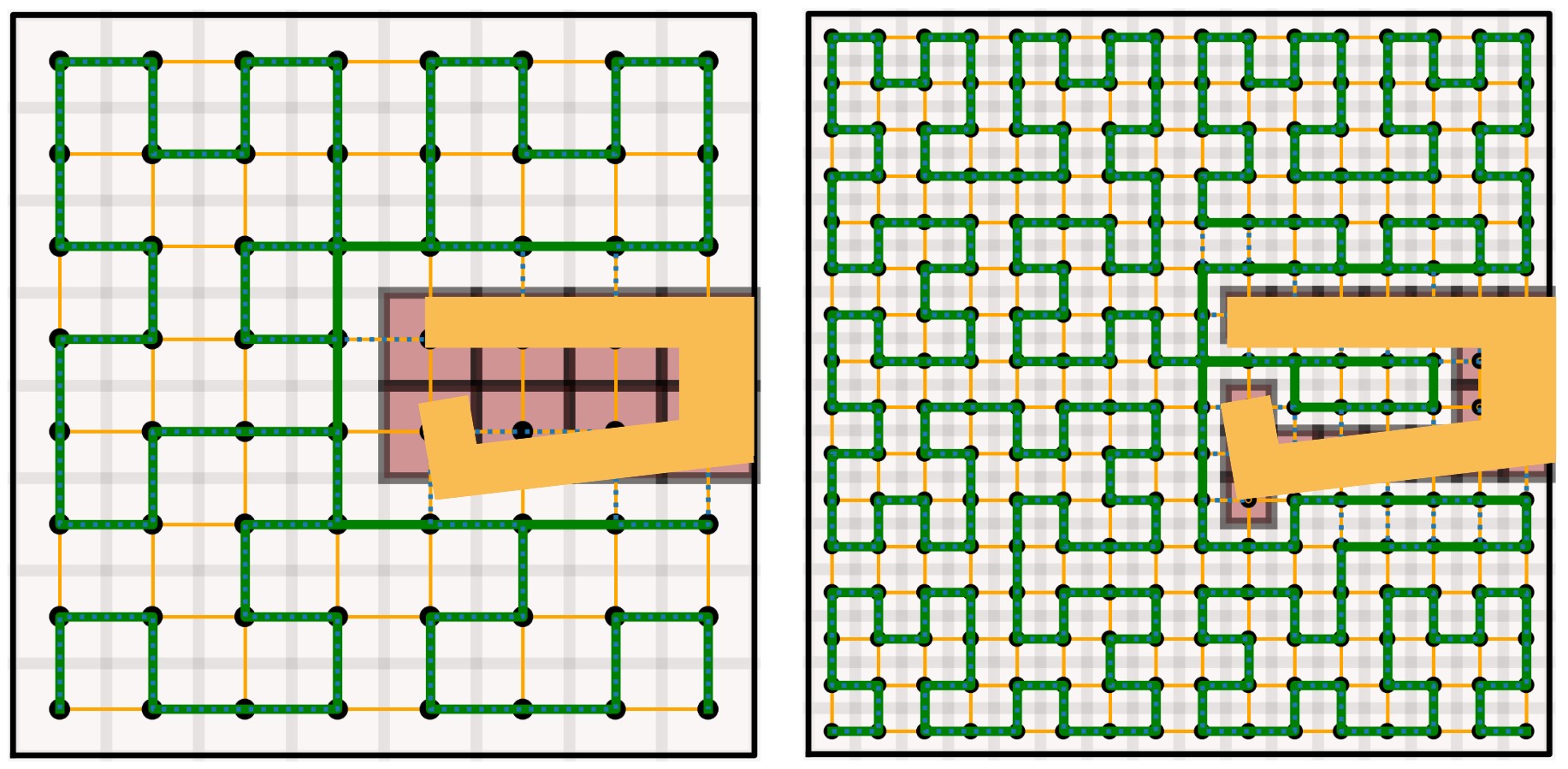}
\caption{Region blocked by tight space reachable through the use of higher iteration (4) of Hilbert curve}
\label{inv}
\end{figure*}
In a situation with tight spaces, certain reachable regions may get blocked due to the iteration of SFC used. In such cases, the iteration of SFC can be increased. This would yield a better approximation of the obstacle and hence better coverage. This is illustrated in Fig \ref{inv}. \\
$Remark$ : There can be multiple strategies to select the next waypoint among the connected waypoints. The proposed strategy ensures that SFC is followed in the regions without obstacles. 
\subsection{Discussion}
\begin{itemize}
    \item The strategy works well with both normal  and sparse obstacles, which is not the case with Lawnmower's path which performs poorly with sparse obstacles.
    \item The path modification occurs locally. Given a obstacle, the strategy suggests path changes closer to it. Meaning SFC retains its original form in the rest of the region.
    \item Higher iterations of SFCs are preferred for sparse obstacles while lower iterations are preferred for obstacles with larger size. Higher iterations offer a more agile path and less occlusion than lower iterations when dealing with sparse obstacles. At the same time, they offer a longer path and may not be desired for larger obstacles.
    \item The path generated using the strategy can be adapted for different parts of the region to be explored, given the knowledge of the nature of obstacle (Sparse or Normal). Further, the agent may want to explore certain parts with more rigor. In such cases, higher iteration may give the agent more time and focus the search on smaller cells/tiles \cite{sadat1, sadat2}. This can be easily achieved with proposed strategy. An Example of such a situation is introduced in the next section\\
\end{itemize}
Obstacles comparable to the sensing radius of the agent are considered normal. While the obstacles much smaller than the sensing radius are considered sparse. Here it is assumed that the agent can modify the sensing radius, as is the case in \cite{sadat1, sadat2} with regards to aerial vehicles. Otherwise, the agent can discard the data collected outside the required sensing radius.  
\section{Results}
The presented strategy was implemented in \textbf{Python} (version $3.8$). \textbf{iGraph} \cite{igraph} was used for graph operations (version $22.0.4$). \textbf{Hilbert curve} library \cite{hilbert} was used for plotting the Hilbert curve (version $2.0.5$). The code for implementation and generated figures are available at - \url{https://github.com/wakodeashay/SKC}. Let's look at examples,
\begin{itemize}
    \item \textbf{Normal Obstacles} : The strategy was implemented for a situation with normal obstacles. Figure \ref{normal} shows two normal sized obstacles blocking the SFC. The modified path is also shown.  
    \par
    Here, while at $0$,  
    \begin{equation*}
    A(V, G) - O = \{1, 2, 3\}
\end{equation*}
the agent searches for obstacle and goes to $1$. Next,
\begin{equation*}
    V = \{0, 1\},   A(V, G) - O = \{2, 3, 13\}
\end{equation*}
agent searches for obstacle and goes to $2$. When no obstacle is found, the agent goes to the next waypoint in the SFC due to the minimum rule. This happens till $21$, when obstacle is detected at $22$. At this point,
\begin{equation*}
    V = \{0, 1, ..., 21\}
\end{equation*}
\begin{equation*}
   A(V, G) - O = \{23, 29, 30, 31, 32, 53, 54, 57, 58\}
\end{equation*}
So, the agent tries to reach $23$ but encounters obstacle while at $20$. Next, it goes to $29$ from $20$ using shortest path. Now,
\begin{equation*}
    V = \{0, 1, ..., 21, 29\}
\end{equation*}
\begin{equation*}
   A(V, G) - O = \{24, 28, 30, 31, 32, 53, 54, 57, 58\}
\end{equation*}
The agent decides to go $24$ but detects an obstacle while at $29$. After which the agent goes to $28$,
\begin{equation*}
    V = \{0, 1, ..., 21, 29, 28\}
\end{equation*}
\begin{equation*}
   A(V, G) - O = \{27, 30, 31, 32, 53, 54, 57, 58\}
\end{equation*}
Hence, the agent goes to $27$ and $26$ is added to $A(V, G) - O$, which is reached finally. Now,
\begin{equation*}
    V = \{0, 1, ..., 21, 29, 28, 27, 26\}
\end{equation*}
\begin{equation*}
   A(V, G) - O = \{25, 30, 31, 32, 53, 54, 57, 58\}
\end{equation*}
The agent decides to go $25$ and detects obstacle while at $26$. Finally, the agent goes to $30$, $31$ and so on following the SFC until next obstacle is detected.

\begin{figure*}[t]
\centering
\includegraphics[width=2.6in]{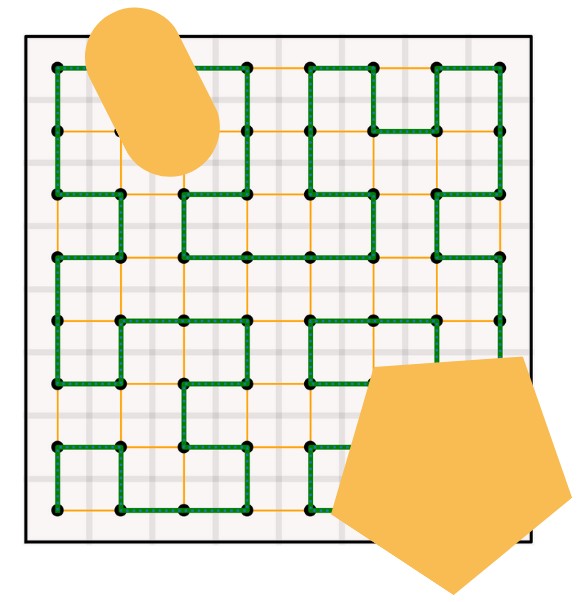}  \includegraphics[width=2.4in]{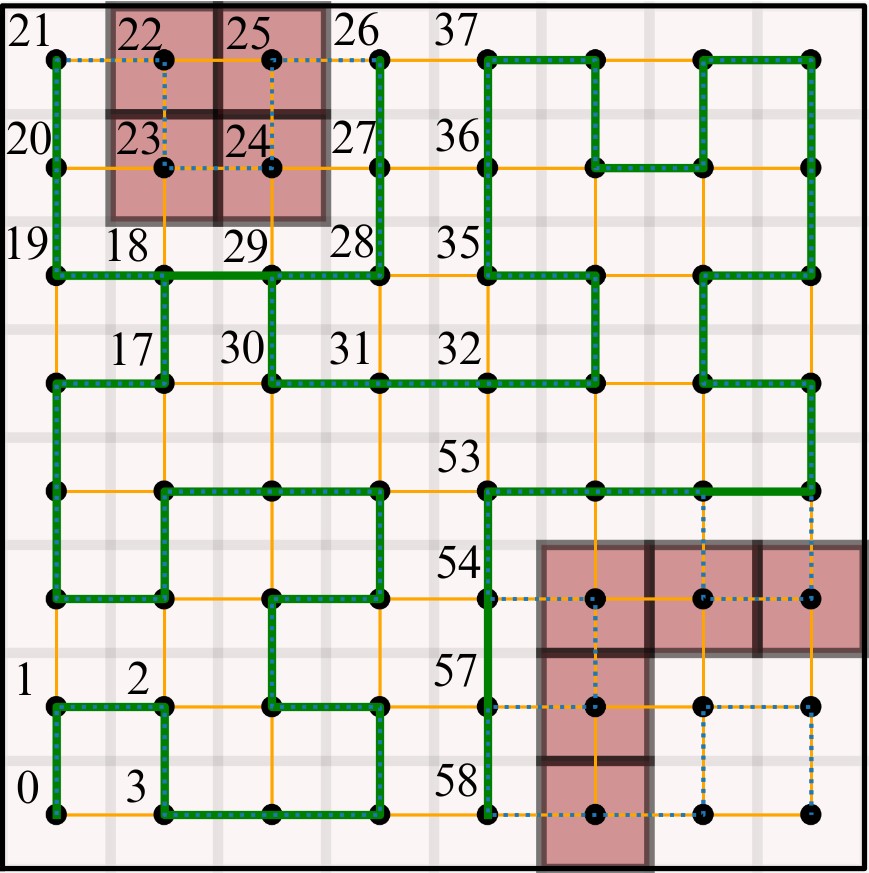} 
\caption{Hilbert curve with normal sized obstacles; Modified path, Brown represent waypoints with detected obstacle}
\label{normal}
\end{figure*}
    \item \textbf{Sparse Obstacles} : Figure \ref{spa} shows three scenarios with obstacle placed sparsely in the area to be explored.
\begin{figure*}[t]
\centering
\includegraphics[width=6in]{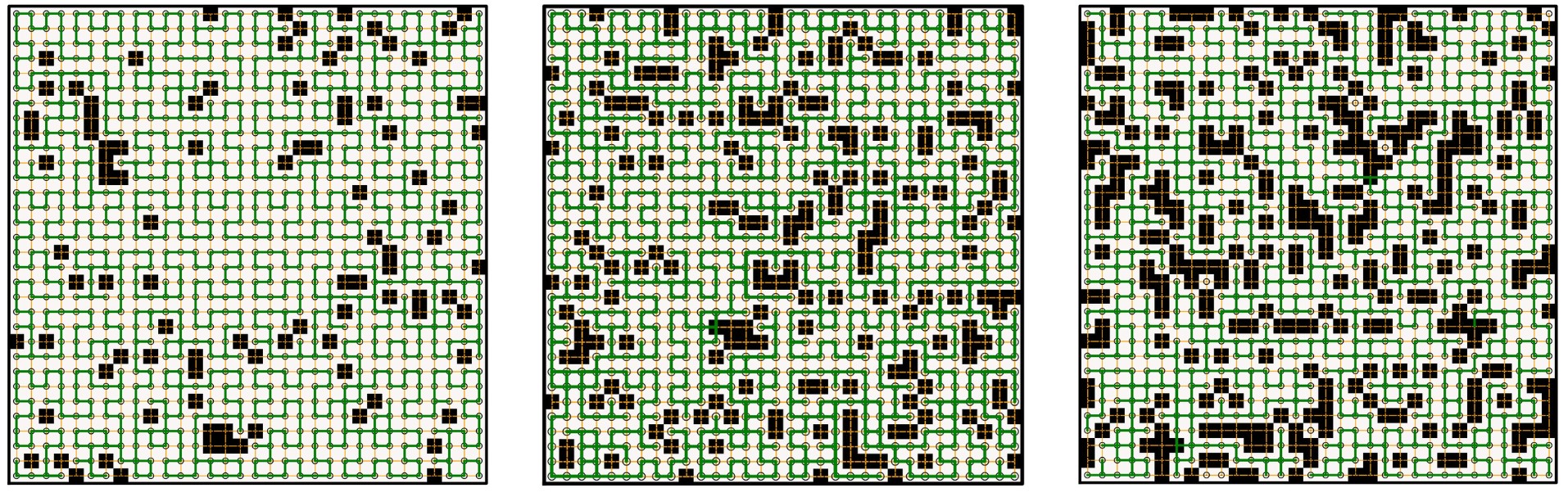}
\caption{Obstacle evasion scenarios with sparse obstacles; 9.7\% (100), 19.5\% (200) and 29.2\% (300) of total waypoints (1024) blocked respectively}
\label{spa}
\end{figure*}
    \item \textbf{Non-Uniform Coverage} : In certain cases user may want to search some parts more rigorously. This can be done using higher iteration SFC, since the agent spends more time with higher iteration and the search is focussed on smaller radii. Nonetheless, if the user knows about the nature of obstacle (sparse or normal), the iteration number can be increased or reduced on the go. This is illustrated in \ref{non-uni}.
    \par
    $Remark$ : In certain situations, agent is unable to get to the last point of the SFC. This is seen in the right-hand lower quadrant of fig \ref{non-uni}. The agent ends his journey at point $A$. Therefore, it does not know if it can directly move to the first waypoint of the next SFC (could be blocked). In this case, agent can be moved to a waypoint ($B$) on the shared edge closest to $A$. Eventually, moving to the closest waypoint on the next SFC and start using the proposed strategy. The agent will move to the first waypoint if it is connected to $C$, in which case it, can start the exploration afresh if required by the user or else continue using the strategy.

\begin{figure*}
\centering
\includegraphics[width=5.4in]{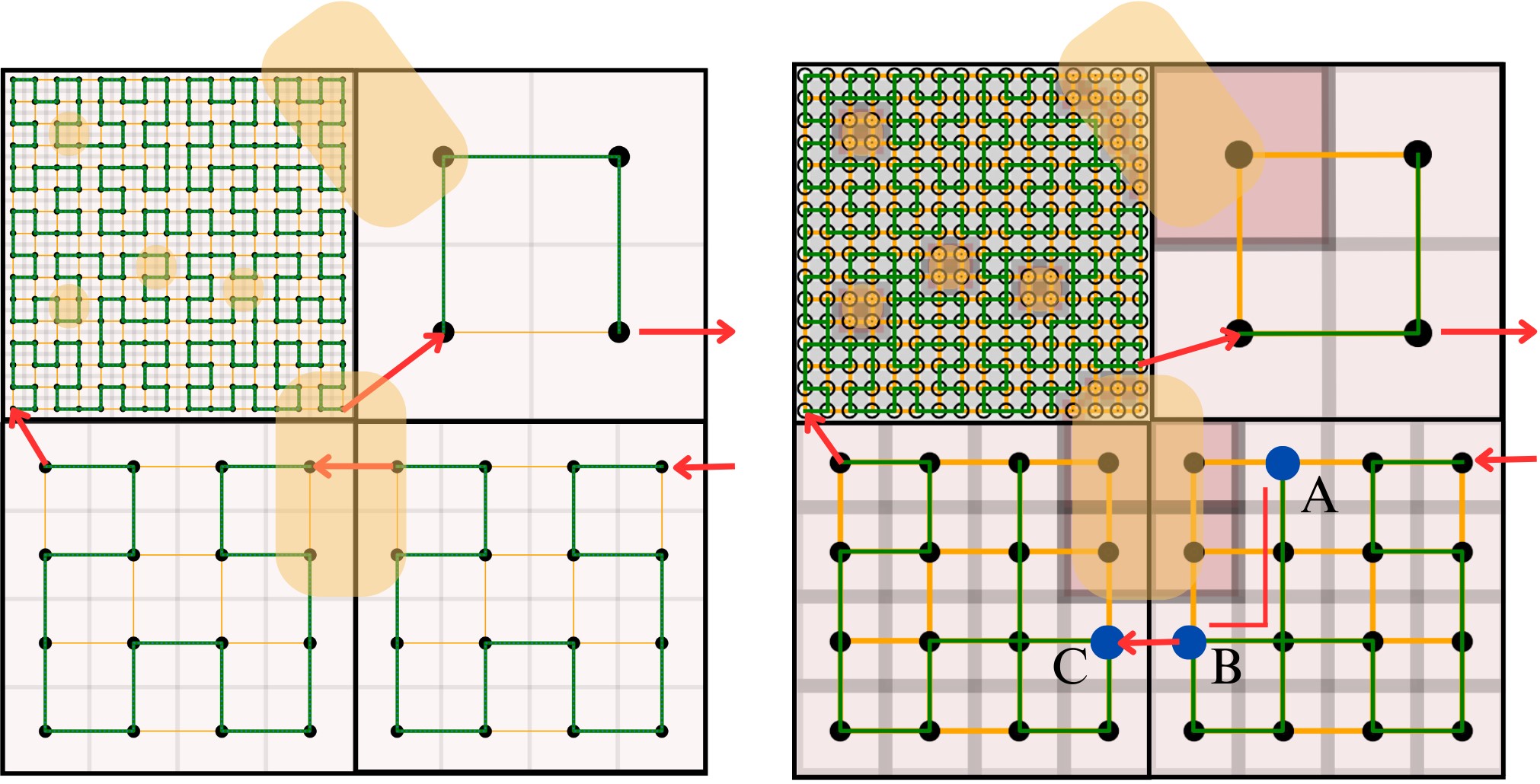}
\caption{Inward red arrow shows starting position, Outward red arrow shows terminal position}
\label{non-uni}
\end{figure*}
\end{itemize}

\section{Conclusion}
This paper discusses a strategy for online obstacle evasion on Space-Filling curve. Preliminaries on Space-Filling curves were introduced, after which the problem was formally put down. The strategy is presented and the completeness is also touched upon. Later, it was validated on examples. The presented strategy is a high-level waypoint planner. The path needs to be smoothened for agents with non-holonomic constraints. Finally, The strategy can be optimized to reduce the total number of visits to waypoints. Extension to 3D space, environment with dynamic obstacles and Multi-agent scenarios could be interesting future directions.

\end{document}